\pdfoutput=1
\documentclass[conference]{IEEEtran}
\IEEEoverridecommandlockouts

\usepackage[numbers,sort&compress]{natbib}   
\usepackage{amsmath,amssymb,amsfonts}
\usepackage{graphicx}
\usepackage{booktabs}
\usepackage[font=footnotesize]{caption}
\usepackage{xcolor}
\usepackage{hyperref}
\hypersetup{pdftitle={MIND: Instruction-Conditioned Incremental Grounding for Robots},colorlinks=true,linkcolor=[rgb]{0,0,0.55},citecolor=[rgb]{0,0.35,0},urlcolor=[rgb]{0,0,0.55}}
\usepackage{url}
\usepackage{tikz}
\usepackage{pifont}
\usetikzlibrary{arrows.meta,positioning,fit,backgrounds}

\newcommand{\Shat}{\hat{S}}
\newcommand{\NONE}{\textsc{none}}

\begin{document}

\title{MIND: Instruction-Conditioned Incremental Grounding for Robots}

\author{Mohamed Abouagour$^{1}$, Junhyun Nam$^{2}$, Wonse Jo$^{2}$, and Byung-Cheol Min$^{1}$%
\thanks{$^{1}$Indiana University Bloomington, USA. $^{2}$Incheon National University, Republic of Korea.}}

\maketitle

\begin{abstract}
Robots following persistent language instructions must decide which cached room judgments remain valid as their environment changes. We present MIND (Maintenance via INstruction-conditioned Dependencies), a framework for maintaining target-room decisions without re-evaluating every room after each event. A language model compiles each instruction into dependencies for semantic qualification and target selection. The controller intersects semantic dependencies with event footprints, refreshes affected judgments, and updates reachability and selection separately. For fixed deterministic evaluation, planning, and selection, conservative dependency and footprint coverage preserves the decisions of full recomputation. On $240$ five-event ProcTHOR sequences with an exact benchmark predicate, MIND reduces mean semantic evaluations from $30.6$ to $11.2$ while preserving all $1{,}200$ post-event decisions. Frozen-evaluator experiments with Phi-4 and Qwen3-14B also preserve all $1{,}200$ decisions, without improving the evaluators' underlying task accuracy. In $32$ closed-loop AI2-THOR episodes using exact-predicate qualification and metadata-derived footprints, MIND matches the final target decision of full recomputation in every episode. Unitree G1 experiments demonstrate target reuse and reselection across room transitions without additional semantic evaluations under MIND, and abstention after a barrier and an object removal at $4$ evaluations against $15$ to $18$. Additional materials are available at \url{https://mind-icra27.github.io/}.
\end{abstract}

\section{Introduction}

A robot may continue following the same instruction while its environment changes. For \textit{``bring me the nearest mug,''} entering another room, closing a doorway, or removing an object can alter the appropriate target in different ways. The robot has already evaluated which rooms satisfy the instruction, yet those cached judgments need not all become invalid together. Re-evaluating every room discards reusable computation; retaining every judgment can preserve a stale target. Each room judgment is a language-model call, which on a robot shares the GPU with perception and on a hosted model is billed, so the calls avoided are the resource at stake. We ask: \emph{after a world change, which room judgments must the robot check again?}

The event alone does not determine the required update. For \emph{the nearest chair}, robot motion can change the selected room without changing which rooms contain chairs. For \emph{a chair in the room I am in}, the same motion changes which rooms qualify. The same variable thus enters qualification or selection depending on the instruction, and its stage decides whether semantic re-evaluation is needed.

\begin{figure}[t]
\centering

\includegraphics[width=\columnwidth]{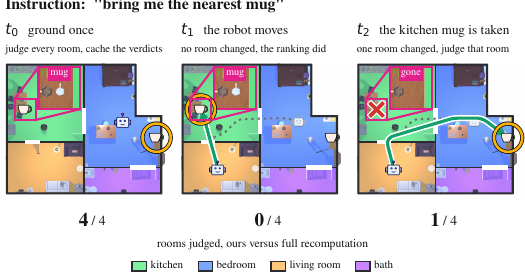}
\vspace{-16pt}
\caption{\textbf{An overview of ``Bring me the nearest mug.''} At $t_0$ every room is judged and cached. Moving
($t_1$) changes no room: judgments are reused, only the target is re-chosen. Removing the kitchen
mug ($t_2$) refreshes one judgment. Full recomputation rejudges all four each time.}
\label{fig:teaser}
\vspace{-13pt}
\end{figure}

Fig.~\ref{fig:teaser} illustrates the distinction within one persistent instruction. Moving to another room changes the ranking of candidates but not their inventory-based qualification, so no room judgment needs refreshing. Removing the kitchen mug instead requires rechecking the kitchen while retaining the other judgments.

Dependency-based reuse is well established in incremental computation, where dependencies may be declared in program or query structure or discovered during execution~\cite{acar2006selfadjusting,gupta1995maintenance,mokhov2018build,doyle1979tms}. In our setting, the state dependencies relevant to maintenance are implicit in a language instruction and differ between qualification and selection. MIND infers these instruction-specific dependencies from language, separately for qualification and selection, to determine which cached judgments require updating. We therefore study decision-preserving maintenance of a given grounding pipeline, rather than replacing its semantic evaluator or replanning the entire task after every event.

We present MIND (Maintenance via INstruction-conditioned Dependencies), a framework for \emph{instruction-conditioned incremental grounding}. Before execution, a language model compiles the instruction into separate dependency sets for room qualification and target selection. After an event, the controller intersects the semantic dependencies with a footprint of reported state changes to identify which cached judgments to refresh. Reachability and deterministic selection are maintained separately, allowing a target to change without unnecessary semantic evaluation. Under the fixed-evaluator and conservative-coverage assumptions of Section~\ref{sec:method}, MIND preserves the decisions of full recomputation. Over-approximation may increase work, whereas missing coverage can leave stale judgments; the guarantee does not certify the compiler's predictions or the evaluator's accuracy. We contribute:

\begin{itemize}
\item Instruction-conditioned, stage-specific dependency signatures combined with region-scoped event footprints, separating semantic refresh, reselection, and reuse.
\item Sufficient conditions under which this maintenance preserves full-recomputation decisions, with semantic cost as initialization plus later regional refreshes.
\item On $240$ five-event ProcTHOR sequences, mean semantic evaluations fall from $30.6$ to $11.2$ with all $1{,}200$ decisions unchanged; compiler, footprint, and interface tests examine failures outside the guarantee, $32$ closed-loop AI2-THOR episodes test maintenance during execution, and $26$ Unitree G1 runs over four scenarios exercise reuse, reselection, global refresh, and abstention on hardware.
\end{itemize}

\section{Related Work}

\noindent\textbf{Grounded planning and dynamic revision.}
Language-conditioned robots ground instructions in affordances and scene graphs~\cite{ahn2022saycan,huang2023groundeddecoding,lin2023text2motion}, revise plans from feedback~\cite{huang2022innermonologue,yao2023react,sayplan2023}, and abstain or relax infeasible goals~\cite{ren2023knowno,park2024clara,musumeci2025contextmatters}. Other systems maintain contextual or task memory for replanning~\cite{yang2026epog,kim2023dynacon,lei2025stma,xu2026mimir}; Event-Grounding Graph represents spatial structure and events~\cite{nguyen2026egg}, while LINGO-Space updates location distributions as referring expressions arrive~\cite{kim2024lingospace}. These approaches motivate maintaining useful world knowledge, but stored information can become stale~\cite{sun2026memorylies}. MIND addresses the validity of cached room-qualification judgments under a persistent instruction: which judgments a world change requires the controller to refresh, and when selection alone must be reconsidered.

\noindent\textbf{Incremental computation and embodied reuse.}
Self-adjusting computation, view maintenance, build systems, and truth maintenance reuse outputs through dependencies on changed inputs~\cite{acar2006selfadjusting,gupta1995maintenance,mokhov2018build,doyle1979tms}. Dependencies may be declared in formal descriptions or tracked during execution. KEEP manages model-state reuse, whereas AgenticCache caches and validates planning transitions~\cite{yang2026keep,agenticcache2026}. MIND shares the dependency-based reuse principle but caches regional semantic verdicts rather than model states or plan transitions: instruction-conditioned signatures separate qualification inputs from selection inputs, regional footprints scope the invalidation, and the rule preserves a fixed pipeline's decisions when both conservatively cover the required updates.

\noindent\textbf{Dependencies derived from language.}
LiP-LLM, DeComplex, and DART-LLM infer precedence among subtasks~\cite{obata2024liplm,pramanick2020decomplex,wang2024dartllm}; incremental parsing repairs grammar or lexical knowledge after parsing or planning failures~\cite{abrams2025incremental}. LG-TAP constructs temporally compact world models through backward observation search and uses efficient graph updates to reuse grounding computations by selectively recomputing world-dependent factor features~\cite{patki2023temporally}. MIND applies dependency-based reuse to a different update unit: cached room-qualification verdicts supporting a persistent target decision. It infers separate qualification and selection dependencies and combines them with event footprints, yielding explicit conditions for decision-preserving regional updates under conservative estimates.

\begin{figure*}[t]
\centering
\includegraphics[width=1\textwidth]{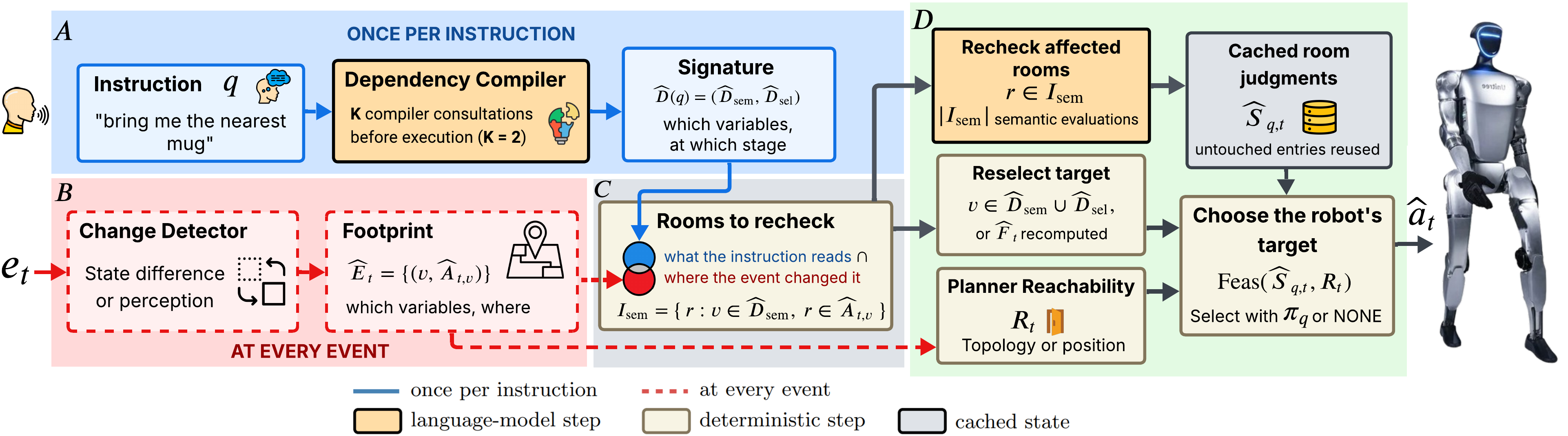}
\vspace{-10pt}
\caption{\textbf{Architecture for ``bring me the nearest mug.''} (A) Once per instruction, the compiler separates
variables needed to judge each room from those needed only to choose among qualifying rooms. (B) After
each event, the detector reports what changed and which rooms may be affected. (C) Their intersection
fixes which room judgments to recheck; position and topology changes also update reachability.
(D) The controller reuses untouched judgments and reselects a reachable target whenever the feasible set is recomputed or a reported change intersects either compiled dependency slot.}
\label{fig:arch}
\vspace{-12pt}
\end{figure*}

\section{Problem Formulation}
\label{sec:problem}

\noindent\textbf{System and event stream.}
An episode fixes a natural-language instruction $q$ and a building with regions $\mathcal{R}$; in our
indoor experiments each region is one room. The episode then evolves through events $e_{1:T}$. Event $e_t$ changes the world from
$X_{t-1}$ to $X_t$, where $X_t$ holds
region inventories, semantic region labels, the robot's region, interaction history, and the
region-adjacency graph. We denote these state categories by
\begin{equation}
\mathcal{V}=\{\text{inventory},\text{region label},\text{position},
\text{history},\text{topology}\}.
\end{equation}
We instantiate $\mathcal V$ with five state categories common to indoor grounding. Concretely,
inventory is the multiset of object types present in a region, so counts matter and removing one of several chairs is a change, region label is its semantic name,
position is the region the robot currently occupies rather than a metric pose, history is the set of regions already used to satisfy the instruction, and topology is the region-adjacency graph with per-doorway passability. History is therefore region-level: a request such as \emph{another chair} asks for a chair in a region not yet used, not for a different chair instance within a used region, which the state does not represent.
Grounding remains region-level and does not represent object coordinates or instance identities. In the graph benchmarks, \emph{nearest} means fewest doorway hops in $G_t$; the closed-loop AI2-THOR experiment instead uses navigation-grid path length. The rule requires only a variable vocabulary shared by compiler, evaluator, and detector; further categories can be added without changing it.
We assume their current values are available to the controller; perception is outside the present
formulation. The system contains a semantic evaluator, a reachability planner, a deterministic
selector with fixed tie-breaking, and a controller that maintains their results across events.

A fresh semantic evaluation, the room-level semantic check, asks whether region $r$ satisfies $q$ in
the current state,
\begin{equation}
s_q(r;X_t)\in\{0,1\}.
\end{equation}
Let $G_t$ denote the region-adjacency graph at time $t$, let $c_t\in\mathcal{R}$ be the robot's current region, and let $T$ be the number of events. The planner returns the reachable-region set $R_t=\operatorname{Reach}(G_t,c_t)$. For any binary regional
support vector $z$, define
\begin{equation}
\operatorname{Feas}(z,R_t):=\{r\in\mathcal{R}:z(r)=1\}\cap R_t.
\label{eq:compose}
\end{equation}
Full recomputation evaluates every region and returns the feasible set and the selected target
region, or \textsc{none},
\begin{equation}
\begin{aligned}
F_t^{\mathrm{full}}(q)&=\operatorname{Feas}\bigl(s_q(\cdot;X_t),R_t\bigr),\\
a_t^{\mathrm{full}}&=\pi_q\bigl(F_t^{\mathrm{full}}(q);X_t\bigr),
\end{aligned}
\label{eq:full}
\end{equation}
where the request-specific selector $\pi_q$ may rank feasible candidates using current state and
returns \NONE{} when none remain.

\noindent\textbf{Cache of room judgments and objective.}
The incremental controller stores $\Shat_{q,t}(r)$, its most recent semantic verdict for region
$r$, initialized exactly as $\Shat_{q,0}(r)=s_q(r;X_0)$ for every region. After event $e_t$, a
maintenance policy chooses a refresh set $U_t(q)\subseteq\mathcal{R}$ and updates
\begin{equation}
\Shat_{q,t}(r)=
\begin{cases}
s_q(r;X_t),&r\in U_t(q),\\
\Shat_{q,t-1}(r),&r\notin U_t(q).
\end{cases}
\label{eq:cache}
\end{equation}
It maintains $\widehat F_t(q)=\operatorname{Feas}(\Shat_{q,t},R_t)$ and reruns $\pi_q$ whenever the feasible set or a selection input may have changed; $\widehat a_t$ is the resulting decision. For $n=|\mathcal{R}|$, the semantic-evaluation cost is
\begin{equation}
C_{\mathrm{sem}}=n+\sum_{t=1}^{T}|U_t(q)|,
\label{eq:cost}
\end{equation}
against $n(T+1)$ for full recomputation. The problem is
\begin{equation}
\min C_{\mathrm{sem}}
\quad\text{subject to}\quad
\widehat a_t=a_t^{\mathrm{full}}\ \text{for every }t.
\label{eq:objective}
\end{equation}
Planner and deterministic-selection work is accounted for separately.

\noindent\textbf{Scope.}
Semantic qualification is unary in the candidate region, and its inventory dependence is restricted to that region's contents. Relations to a distinguished state-dependent region, such as the robot's current region, remain unary in the candidate, but predicates that inspect other regions' inventories are outside the locality assumption used below. Predicates over multiple free candidate regions, cross-candidate cardinality constraints, and future-state prediction are outside scope.

\section{Methodology}
\label{sec:method}
MIND compiles dependencies once per instruction and uses event footprints to maintain cached room judgments, reachability, and target selection (see Fig.~\ref{fig:arch}).

\subsection{Instruction dependency map}
The compiler (Fig.~\ref{fig:arch}) maps $q$ to
\begin{equation}
\widehat D(q)=\bigl(\widehat D_{\mathrm{sem}}(q),
\widehat D_{\mathrm{sel}}(q)\bigr),
\label{eq:signature}
\end{equation}
The true read sets are defined counterfactually for the fixed semantic function $s_q$ and selector $\pi_q$ used in the comparison. A variable $v$ belongs to $D^*_{\mathrm{sem}}(q)$ if changing only state category $v$ can change $s_q(r;X)$ for some candidate region. It belongs to $D^*_{\mathrm{sel}}(q)$ if, with the feasible set fixed, changing only $v$ can change the selected region. The two sets need not be disjoint: a variable may affect both qualification and selection. A selection-only change requires deterministic reselection but no fresh semantic verdict. These true read sets are unavailable to the controller. Reachability is maintained separately, is refreshed whenever topology or robot position changes, and is always applied through \eqref{eq:compose}.

The compiler prompt defines the five categories, explains the two slots, and supplies one development example per request family. It emits two variable lists, parsed into
\eqref{eq:signature}, with inventory included in the semantic slot by construction. Decoding is
greedy. The frozen downstream configuration compiles under $K$ cyclic orders of the six examples and
unions each slot, with $K{=}2$ unless stated; these are $K$ pre-execution consultations per instruction, never repeated after world events.

Two repairs expand a signature monotonically. \emph{Sample union} unions each slot over the $K$ orders. \emph{Stage widening} treats reported variables as possibly belonging to either stage. With $W=\widehat D_{\mathrm{sem}}\cup\widehat D_{\mathrm{sel}}$, define
\begin{equation}
\operatorname{widen}(\widehat D_{\mathrm{sem}},\widehat D_{\mathrm{sel}})
=\bigl(W,\,W\setminus\{\text{inventory}\}\bigr).
\label{eq:widen}
\end{equation}
The selector never reads inventory. Widening repairs semantic-to-selection demotions and preserves \eqref{eq:cons} whenever the unwidened signature satisfies it, at the cost of additional semantic evaluations. When widening is evaluated, it is applied after sample union. Unless explicitly labeled as widened, the main configuration uses slot-wise union only ($K{=}2$); widened configurations are reported separately. Neither operation recovers a variable absent from every compiler output.

\subsection{Event footprint}
The detector represents an observed event by
\begin{equation}
\widehat E_t=\{(v,\widehat A_{t,v})\},
\label{eq:footprint}
\end{equation}
where $v\in\mathcal V$ is a reported changed variable and $\widehat A_{t,v}\subseteq\mathcal R$ is its estimated invalidation scope. Reference footprints use changed rooms for inventory: one for a removal, source and destination for a relocation. Position, topology, history, and region label use $\mathcal R$ whenever they change, conservatively allowing nonlocal effects on qualification. This is an invalidation convention, not a claim that every room changes physically. Detector estimates may miss or add variables or regions; Proposition~\ref{prop:equiv} needs coverage of the reference footprint.

\subsection{Scoped update and robot target selection}
Intersecting
\eqref{eq:signature} and \eqref{eq:footprint} yields
\begin{equation}
\begin{aligned}
I_{\mathrm{sem}}(e_t,q)=\bigl\{r\in\mathcal{R}:&\exists(v,\widehat A_{t,v})\in\widehat E_t,\\[-2pt]
&v\in\widehat D_{\mathrm{sem}}(q),\ r\in\widehat A_{t,v}\bigr\}.
\end{aligned}
\label{eq:inval}
\end{equation}
The policy sets $U_t(q)=I_{\mathrm{sem}}(e_t,q)$ and then:
\begin{enumerate}
\item refreshes planner outputs if topology or robot position was reported changed;
\item updates \eqref{eq:cache} only for $r\in U_t(q)$;
\item recomputes $\widehat F_t(q)$ if reachability changed or $U_t(q)\neq\emptyset$; and
\item reruns $\pi_q$ if $\widehat F_t(q)$ was recomputed or a reported changed variable lies in
$\widehat D_{\mathrm{sem}}(q)\cup\widehat D_{\mathrm{sel}}(q)$.
\end{enumerate}
Only step 2 invokes the semantic evaluator. The downstream steps trigger on a report rather than
only on a changed verdict: a selection dependency that the compiler promoted into the semantic
slot may refresh no region and reproduce every support value, and step 4 still reruns selection
because the variable was reported changed.

\subsection{Conservative estimates and correctness}
The controller receives $\widehat D(q)$ and $\widehat E_t$, and either may be inexact. We say
$\widehat D(q)$ \emph{conservatively contains} $D^*(q)$ when
\begin{equation}
D^{*}_{\mathrm{sem}} \subseteq \widehat{D}_{\mathrm{sem}}
\quad\text{and}\quad
D^{*}_{\mathrm{sel}} \subseteq \widehat{D}_{\mathrm{sem}} \cup \widehat{D}_{\mathrm{sel}} .
\label{eq:cons}
\end{equation}
Let $E_t^*$ contain all actually changed categories, using the preceding footprint convention: changed rooms for inventory and $\mathcal R$ for the other categories. Define $\widehat E_t\succeq E_t^*$ when every $(v,A^*_{t,v})\in E_t^*$ has a corresponding $(v,\widehat A_{t,v})\in\widehat E_t$ with $A^*_{t,v}\subseteq\widehat A_{t,v}$.

With the semantic evaluator and its input interface fixed, approximation errors are asymmetric. Promoting a selection dependency into the semantic stage causes unnecessary re-evaluation but still triggers reselection; demoting or omitting one can leave stale support and voids the guarantee, though it need not change the target on every event. Adding dependencies or regions preserves deterministic outputs.

\newtheorem{proposition}{Proposition}
\begin{proposition}[Decision equivalence under dual conservative containment]
\label{prop:equiv}
Assume $\Shat_{q,0}=s_q(\cdot;X_0)$; all task-relevant changes are in $\mathcal V$; inventory dependencies are local to the candidate region; the semantic function $s_q$ is a fixed function of the state, chosen independently of $\widehat D$ (the reference-field interface of Section~\ref{sec:benchmarks}); \eqref{eq:cons} holds; every observed footprint satisfies $\widehat E_t\succeq E_t^*$ under the stated footprint convention; and semantic evaluation, planning, and selection are deterministic. Then $\widehat a_t=a_t^{\mathrm{full}}$ after every event.
\end{proposition}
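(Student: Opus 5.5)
The plan is an induction on $t$ with a stronger invariant: after every event, the cache agrees with fresh evaluation, $\Shat_{q,t}(r)=s_q(r;X_t)$ for every $r\in\mathcal R$, and the maintained reachability equals $R_t$. This gives $\widehat F_t(q)=F_t^{\mathrm{full}}(q)$ by \eqref{eq:compose}. A separate argument, using \eqref{eq:cons} and determinism of $\pi_q$, then upgrades equality of feasible sets to equality of decisions. The base case is the assumption $\Shat_{q,0}=s_q(\cdot;X_0)$ together with an initial planner call, so that $\widehat a_0=a_0^{\mathrm{full}}$ by determinism.

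\textbf{Cache step.} Assume the invariant at $t-1$ and fix a region $r$.

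If $r\in U_t(q)$, then \eqref{eq:cache} stores $s_q(r;X_t)$ directly. This uses the fact that $s_q$ is a fixed deterministic function chosen independently of $\widehat D$.

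If $r\notin U_t(q)$, we must show $s_q(r;X_t)=s_q(r;X_{t-1})$. Since all task-relevant change lies in $\mathcal V$, I decompose $X_{t-1}\to X_t$ into successive single-category changes, one for each $v$ with $(v,A^*_{t,v})\in E_t^*$. For a changed $v\notin D^*_{\mathrm{sem}}(q)$, the counterfactual definition of $D^*_{\mathrm{sem}}$ says that changing only $v$ cannot change $s_q(r;\cdot)$. For a changed $v\in D^*_{\mathrm{sem}}(q)$, \eqref{eq:cons} gives $v\in\widehat D_{\mathrm{sem}}(q)$. Also $\widehat E_t\succeq E_t^*$ supplies $(v,\widehat A_{t,v})$ with $A^*_{t,v}\subseteq\widehat A_{t,v}$. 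Because $r\notin I_{\mathrm{sem}}$ by \eqref{eq:inval}, we get $r\notin A^*_{t,v}$.

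Every non-inventory category has $A^*_{t,v}=\mathcal R$, so such a $v$ cannot have changed. Hence $v$ is inventory and $r$'s own contents are unchanged. Inventory locality then says $s_q(r;\cdot)$ ignores other regions' inventories, so this change also leaves $s_q(r;\cdot)$ fixed.

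\textbf{Main obstacle.} The counterfactual read set is defined by changing one category at a time, while an event may change several at once. A single-variable definition does not by itself rule out interaction effects, because the intermediate states of the decomposition may be ones the definition does not quantify over. I would first handle this by reading the definition of $D^*_{\mathrm{sem}}$ as quantifying over all states $X$. The chain argument then goes through, because each intermediate state is again a state and each step changes a single category that leaves $s_q(r;\cdot)$ invariant. If that reading is too strong, I would state it explicitly as the intended meaning of ``all task-relevant changes are in $\mathcal V$''.

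\textbf{Reachability step.} $R_t$ depends only on $G_t$ and $c_t$, that is, on topology and position. Both carry footprint $\mathcal R$ when changed, so by $\widehat E_t\succeq E_t^*$ any such change is reported. Step 1 then recomputes the planner output deterministically; otherwise $R_t=R_{t-1}$. This gives $\widehat F_t(q)=F_t^{\mathrm{full}}(q)$ whenever steps 3 and 4 recompute.

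When step 3 does not fire, the support and reachability are unchanged and so are the feasible sets. Selection then reduces to whether $\pi_q$ is rerun. If the feasible set is fixed and no changed variable lies in $D^*_{\mathrm{sel}}(q)$, the same chain argument with the definition of $D^*_{\mathrm{sel}}$ shows that the full pipeline's $a^{\mathrm{full}}_t$ equals $a^{\mathrm{full}}_{t-1}=\widehat a_{t-1}$. If some changed $v\in D^*_{\mathrm{sel}}(q)$, then \eqref{eq:cons} puts $v$ in $\widehat D_{\mathrm{sem}}\cup\widehat D_{\mathrm{sel}}$, and coverage ensures $v$ is reported. Step 4 therefore reruns $\pi_q$ on identical inputs, and determinism gives $\widehat a_t=a_t^{\mathrm{full}}$.
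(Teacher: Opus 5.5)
Your proposal is correct and takes essentially the paper's route: induction on events with the invariant $\Shat_{q,t}=s_q(\cdot;X_t)$, signature and footprint containment forcing a refresh of any region whose verdict could change, coverage forcing a planner refresh on topology or position changes, and \eqref{eq:cons} guaranteeing that step~4 reruns $\pi_q$ whenever a true selection dependency changes. Your single-category chain decomposition (which reads the counterfactual read sets as quantified over all states) and your explicit treatment of the case where selection is not rerun make rigorous what the paper's sketch leaves implicit, without changing the argument.
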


\noindent\emph{Proof sketch.} Induct on events. The caches agree initially. If a fresh verdict
$s_q(r;X_t)$ can differ from $\Shat_{q,t-1}(r)$, some variable in
$D^*_{\mathrm{sem}}$ changed with $r$ in its true footprint. Signature and footprint containment
place $r$ in \eqref{eq:inval}, so it is refreshed; thus no stale semantic value survives and
$\Shat_{q,t}=s_q(\cdot;X_t)$. Observed topology and position changes also refresh planner output,
so $\widehat F_t=F_t^{\mathrm{full}}$. Finally, any changed variable in
$D^*_{\mathrm{sel}}$ lies in $\widehat D_{\mathrm{sem}}\cup\widehat D_{\mathrm{sel}}$, so step 4
reruns selection whenever it is reported changed, whether or not its footprint refreshed a
region. Extra reported dependencies or regions add work but cannot alter a deterministic result.
$\square$

Proposition~\ref{prop:equiv} preserves a fixed evaluator's decisions, not their task correctness. Compiled prompt fields change the evaluator itself, so agreement within that interface need not imply agreement with the reference-field evaluator; Section~\ref{sec:benchmarks} defines both interfaces and Table~\ref{tab:negctl} reports task accuracy separately.

\noindent\textbf{Semantic-evaluation cost.}
For $m_t=|U_t(q)|$, the cost is $n+\sum_t m_t$ in \eqref{eq:cost}; savings vanish when global refreshes give $m_t\approx n$. Maintaining all cached verdicts is stronger than preserving only the target, so this is not a minimum-cost solution to \eqref{eq:objective}.

\section{Benchmarks and Experimental Setup}
\label{sec:benchmarks}

The six-family dependency
and request benchmarks use automatically checked manifests frozen by hash before model evaluation. A dependency manifest is refused if a sequence has no verdict-changing event or abstains throughout, a history sequence is not moved by its use event, a move or door event changes a verdict for a nearest request, or no event exposes the inventory-only rule; a request manifest is refused if families are unbalanced or share a signature, a minimal pair adds an edge other than the one claimed, a held-out template leaks into development, or a signature disagrees with the event instrument. These are consistency checks, not independent certification of natural-language semantics. Generators, prompts, and model configurations are in the repository\footnote{\url{https://anonymous.4open.science/r/mind-icra27}}.

\noindent\textbf{Dependency benchmark.}
The primary benchmark contains $240$ five-event sequences over houses from the ProcTHOR~\cite{procthor2022} \emph{test} split, using $11$ distinct instructions across six request families. \emph{Inventory-only} reads inventory only; \emph{named-region} adds a semantic region label; \emph{current-region} adds robot position to qualification; \emph{history-dependent} adds the set of previously used regions; \emph{nearest-candidate} places position and topology in selection; and \emph{nearest+history} combines the last two. Events shut doorways, empty or relocate objects, move the robot, and consume a referent. After each event, the manifest records the reference target and the regional verdicts that actually change value. The latter define invalidation recall and precision, distinguishing missed changed verdicts from conservative extra refreshes. The $240$ sequences span $57$ houses, with $55$, $9$, $27$, $54$, $40$, and $55$ sequences for the six families in the order listed. The $11$ instructions are held-out-template phrasings from the request benchmark and are disjoint from the six compiler examples; two also appear among the minimal pairs.

\noindent\textbf{Request benchmark.}
The request benchmark has 216 requests from 36 templates, six per family, and holds out whole
\emph{templates} rather than sentences, since a sentence split measures rewording; the frozen
compiler evaluation uses the 69 requests of the held-out templates. It also includes 144 minimal pairs in which a modifier changes a specified set of stage-specific dependency edges while leaving unrelated edges unchanged.

\noindent\textbf{Unseen-placement benchmark (Adjacent).}
A seventh family tests a dependency placement absent from all six compiler demonstrations.
The benchmark interprets \emph{a chair in a room next to the one I am in} as a chair in a room connected to the robot's current room by one currently passable doorway, so closing that doorway changes qualification although the rooms remain physically adjacent. The semantic dependencies are inventory, position, and topology. The judge receives passability as a state field; the compiler sees only the phrasing and the variable definitions, in which topology is reachability given doors. The benchmark contains $80$ sequences in $80$ houses, six held-out phrasings, three of which name a door or connection explicitly, and all five event types.

\noindent\textbf{Backbones.}
We evaluate Phi-4, Qwen3-8B and Qwen3-14B (\text{microsoft/phi-4},
\text{Qwen/Qwen3-8B}, \text{Qwen/Qwen3-14B}) as locally deployable models through
HuggingFace Transformers, and gpt-oss-120b through a hosted OpenAI-compatible endpoint.
Decoding is greedy (temperature $0$ at the endpoint), Qwen3 thinking mode is disabled, compiler outputs are capped at $512$ new tokens, and judge verdicts at $8$. Local weights load in bfloat16 on one L40S GPU (Transformers 5.8, PyTorch 2.10), and judge prompts run in chunks of four per forward pass. The fallible-evaluator experiments use frozen Phi-4 and Qwen3-14B verdicts; the timing study uses Phi-4.  

\noindent\textbf{Policies.}
Within each evaluator-interface condition, policies share the semantic evaluator, planner, deterministic selector, and composition rule \eqref{eq:compose}. They differ in the maintenance schedule, including semantic refreshes and reselection triggers. Table~\ref{tab:rq12} compares full recomputation, a static inventory-only signature, a universal semantic signature, reference variables with and without stage placement, and the compiled signature. Reference signatures are oracle controls, not inferred baselines. The exact predicate, the affordance sets it scores, and the selector are benchmark components fixed per family: the selector returns the fewest-hop feasible region for nearest-candidate requests, breaking ties on the display name, and the alphabetically first feasible region otherwise. Every policy reruns this selector after every event, so the inventory-only row fails only through stale semantic verdicts and never through suppressed reselection. Maintenance policies receive the compiled signature and the footprint, not the family label.  

\noindent\textbf{Semantic evaluators and prompt interfaces.}
An exact benchmark predicate isolates maintenance errors from semantic-model errors. Fallible-judge
experiments use a \emph{frozen verdict tape}: each distinct regional question is evaluated once per interface, and the resulting verdict is replayed to every policy; an unparsable generation aborts the run. The \emph{reference-field interface} constructs prompts from $D^*_{\mathrm{sem}}(q)$ and holds these fields fixed independently of the predicted signature. A prompt carries exactly one line per declared variable: the candidate region's contents for inventory, and position, history, and adjacency as yes/no relations to that region; the tape key is the full prompt string. The \emph{compiled-field interface} instead uses $\widehat D_{\mathrm{sem}}(q)$ without reference fields. An omitted field can make both policies agree on an incorrect predicate, so task accuracy is evaluated separately from agreement. Phi-4 recovers the semantic slots exactly on the $11$ main dynamic instructions, making the two interfaces coincide there; the unsafe-signature and Adjacent experiments test their divergence. 

\noindent\textbf{Metrics.}
For five-event sequences, \emph{episode accuracy} requires all five post-event targets to match the benchmark reference. \emph{Incremental agreement} measures target agreement with full recomputation under the same evaluator and interface. \emph{Semantic evaluations} count regional predicate evaluations including initialization, as in \eqref{eq:cost}; under a frozen tape these are logical refresh costs, not repeated inference calls. \emph{Conservative containment} is \eqref{eq:cons}, and \emph{semantic inflation} is $|\widehat D_{\mathrm{sem}}|/|D^*_{\mathrm{sem}}|$.

\section{Results}
\label{sec:results}
Every number below is measured on the frozen evaluation units of Section~\ref{sec:benchmarks} with the metrics defined there. Semantic-evaluation counts are reported per five-event sequence. Table~\ref{tab:rq12} reports episode-accuracy proportions with $95\%$ Wilson intervals and semantic-evaluation means with standard deviations. Decision-agreement rates are descriptive proportions over the evaluated decisions; the observation-derived footprint experiment reports a one-sided Clopper--Pearson bound.

\subsection{Ablation: the same targets, fewer rooms rechecked}

\noindent\textbf{Conservative coverage and update scope.}
Table~\ref{tab:rq12} separates target accuracy from semantic cost using the exact predicate. The static inventory-only policy attains $0.479$ episode accuracy and $0.703$ invalidation recall, whereas all containing signatures preserve the $1{,}200$ decisions. Along the ablation sequence, regional footprints reduce mean evaluations from $30.6$ to $23.4$, request-specific variables to $14.1$, and stage separation to $11.2$; these are sequential differences, not order-independent contributions. Stage separation matters for the families with selection dependencies: $18.8\to8.3$ (\emph{nearest-candidate}) and $23.5\to13.2$ (\emph{nearest+history}). 

\begin{table}[h]
\vspace{5pt}
\centering
\caption{Maintenance ablation on 240 five-event robot episodes with the exact benchmark predicate.
Episode accuracy with $95\%$ Wilson intervals; semantic evaluations per sequence, mean $\pm$ SD.}

\vspace{-5pt}
\label{tab:rq12}
\scriptsize\setlength{\tabcolsep}{3pt}
\begin{tabular}{@{}lccc@{}}
\toprule
policy & episode acc. & sem.\ evals & inv.\ recall\\
\midrule
full recomputation             & 1.000 $[.984,1]$ & $30.6\pm9.7$ & 1.000\\
static inventory-only          & 0.479 $[.417,.542]$ & $8.1\pm1.6$ & 0.703\\
universal semantic signature   & 1.000 $[.984,1]$ & $23.4\pm6.4$ & 1.000\\
reference variables, no stage split & 1.000 $[.984,1]$ & $14.1\pm6.6$ & 1.000\\
reference stage-specific       & 1.000 $[.984,1]$ & $11.2\pm3.6$ & 1.000\\
\textbf{MIND, Phi-4 ($K{=}2$)} & 1.000 $[.984,1]$ & $11.2\pm3.6$ & 1.000\\
\bottomrule
\end{tabular}
\vspace{-5pt}
\end{table}

\noindent\textbf{Compile-once signatures outperform event-wise relevance decisions.}
The compiler-free alternative shows the instruction and each event to a language model and asks
whether recomputation is needed (\textsc{none}, \textsc{selection}, or \textsc{semantic}), with
the compiler's six examples and stage definitions in the prompt; everything else is shared.
Compile-once signatures reach $1.000$ episode accuracy at $11.2$ evaluations from 22 compiler
consultations with no non-containing decision. Event-wise relevance reaches $0.658$ at $13.1$
evaluations: of 1200 decisions, 184 omit a required semantic dependency and 165 add a spurious
one. The Phi-4 decider is asked once per event ($1200$ logical calls, $434$ distinct prompts since identical prompts are replayed) and returns different verdicts for $18$ of the $55$ instruction and event-type pairs recurring with other room names.  

\subsection{Instruction-conditioned compilation for new requests}

\noindent\textbf{Containment versus exact recovery.}
On the $69$ held-out requests (Table~\ref{tab:rq3}), Phi-4 and Qwen3-14B produce only conservative errors across six demonstration orders. Qwen3-14B attains containment $1.000\pm0.000$ despite exact recovery $0.454\pm0.097$; Qwen3-8B attains $0.988\pm0.019$ containment. gpt-oss-120b has the highest exact recovery ($0.935$), without inflation or non-containing outputs.

\noindent\textbf{Sample union.}
For Qwen3-8B, increasing $K$ from $1$ to $3$ raises containment from $0.988$ to $1.000$ and removes all non-containing outputs, but lowers exact recovery from $0.372$ to $0.210$ at $21\%$ inflation. Means are over six demonstration orders.

\begin{table}[h]
\centering
\vspace{3pt}

\caption{Dependency-compiler evaluation on the frozen 69-instruction held-out set.}
\vspace{-5pt}
\label{tab:rq3}
\scriptsize\setlength{\tabcolsep}{3.5pt}
\begin{tabular}{@{}lcccc@{}}
\toprule
compiler, $K$ & exact sig. & containment & inflation & non-containing (total)\\
\midrule
gpt-oss-120b, $K{=}1$ & 0.935 & 1.000 & 1.00 & 0\\
Phi-4, $K{=}1$        & 0.727 & 1.000 & 1.04 & 0\\
\textbf{Phi-4, $K{=}2$ (deployed)} & 0.667 & \textbf{1.000} & 1.06 & \textbf{0}\\
Qwen3-14B, $K{=}1$    & 0.454 & 1.000 & 1.08 & 0\\
Qwen3-14B, $K{=}2$    & 0.353 & 1.000 & 1.11 & 0\\
Qwen3-8B, $K{=}1$     & 0.372 & 0.988 & 1.12 & 5\\
Qwen3-8B, $K{=}2$     & 0.273 & 0.998 & 1.17 & 1\\
Qwen3-8B, $K{=}3$     & 0.210 & \textbf{1.000} & 1.21 & \textbf{0}\\
\bottomrule
\end{tabular}
\vspace{-5pt}

\end{table}

On the 144 minimal pairs, scored by whether the variant's signature gains exactly the modifier's
variables in the right slot (edge-correct) and nothing in the other slot (clean), Phi-4 reaches
$0.854$ and $0.799$, gpt-oss-120b $0.743$ and $0.729$, Qwen3-14B $0.681$ and $0.465$, and
Qwen3-8B $0.639$ and $0.569$. The hard case is the modifier that should add position and topology to \emph{selection}: edge-correct for $0$ of $36$ pairs under both Qwen models, which place it in the semantic slot ($0.58$ for Phi-4, $0.47$ for gpt-oss-120b), a conservative promotion matching the no-stage-split ablation.

\noindent\textbf{Conservative repair on Adjacent.}
For Adjacent's passable one-hop rule (Section~\ref{sec:benchmarks}), only gpt-oss-120b places topology in the semantic slot across all six demonstration orders (raw containment $1.000$); Qwen3-14B, Phi-4, and Qwen3-8B place it in selection ($0.722$, $0.556$, $0.444$). Stage widening~\eqref{eq:widen} repairs the demotion to $1.000$ containment for the first two, and Qwen3-8B additionally needs a $K{=}3$ union to recover omitted variables. With the exact benchmark predicate, every repaired signature reaches $1.000$ episode accuracy at
$18.3$ semantic evaluations, compared with $1.000$ at $30.4$ for full recomputation. With a Phi-4 judge over $400$ decisions, the raw Phi-4 signature yields agreement of $0.960$ under the reference-field interface and $0.953$ under the compiled-field interface; widening restores agreement on all $400$ decisions under both interfaces. The $1.66\times$ reduction is smaller than in the primary benchmark because position and topology affect qualification and trigger global refreshes. Full-recomputation episode accuracy remains $0.06$ under the compiled-field interface, whose prompt fields come from the raw, unwidened Phi-4 signature, and $0.00$ under the reference-field interface, with final-step accuracy of $0.38$ and $0.31$, respectively. Widening, adopted after these demotions were observed, reads no labels; on the primary benchmark it keeps every decision at $15.3$ evaluations.

\begin{table}[h]
\centering
\vspace{3pt}
\caption{Interface ablation: a missed dependency and its conservative repair (Phi-4 judge,
$1200$ decisions).}

\label{tab:negctl}
\vspace{-5pt}
\scriptsize\setlength{\tabcolsep}{3pt}
\begin{tabular}{@{}llccc@{}}
\toprule
compiler signature & interface & agreement & episode acc. & evals\\
\midrule
Qwen3-8B, one sample & reference & 0.992 ($1190/1200$) & 0.329 & 10.4\\
Qwen3-8B, one sample & compiled  & 1.000 & 0.329 & 10.4\\
Qwen3-8B, $K{=}2$ union & reference & \textbf{1.000} & \textbf{0.342} & 11.5\\
Phi-4, $K{=}2$ (main) & both & 1.000 & 0.342 & 11.2\\
\bottomrule
\end{tabular}
\vspace{-5pt}
\end{table}

\subsection{Effects of missed dependencies and footprints}

\noindent\textbf{Missed dependencies and repair.}
A Qwen3-8B signature omits semantic history (Table~\ref{tab:negctl}). Under the reference-field interface, stale reuse changes $10/1200$ decisions, reducing agreement to $0.992$ and episode accuracy from $0.342$ to $0.329$, while evaluations fall from $11.2$ to $10.4$. Under the compiled-field interface, both policies lack history and agree on all $1200$ decisions, but both score $0.329$ episode accuracy. The same three history episodes fail through different mechanisms: stale reuse versus an incomplete evaluator input. A $K{=}2$ union restores reference-field agreement on all decisions and accuracy $0.342$ at $11.5$ evaluations. 

\noindent\textbf{Approximate event footprints.}
Perturbing the footprint alone over $240$ sequences with the exact predicate, over-approximation never costs correctness (agreement, accuracy, and recall stay $1.000$ while evaluations rise from $11.2$ to $18.5$ when every untouched region is added); under-approximation degrades monotonically as evaluations fall to $8.2$, its stale-step rate tracking the fraction of partial footprints (Pearson $0.997$). Mislabeling the changed variable behaves the
same way: acting on the detector's top guess falls to $0.683$ accuracy at a $40\%$ error rate,
whereas invalidating over the whole candidate set holds $1.000$ at $27.5$ evaluations versus $30.6$.

\subsection{Closed-loop and observation-derived robot updates}

\noindent\textbf{Inferred maintenance matches full recomputation under fallible evaluators.}
With frozen Phi-4 and Qwen3-14B evaluators, inferred maintenance matches full recomputation on
all \text{1200 post-event decisions across 240 sequences} at $11.2$ semantic evaluations versus
$30.6$. The static inventory-only rule reaches agreement $0.958$ and $0.983$, respectively.
Full-recomputation episode accuracy is $0.342$ with Phi-4 and $0.275$ with Qwen3-14B.
That statistic compounds evaluator error over five decisions (final-step accuracy is $0.704$ and
$0.638$); inferred maintenance reproduces both figures exactly, so agreement measures maintenance
fidelity, not semantic-model competence.

\noindent\textbf{Selective maintenance reduces semantic-evaluator runtime.}
On the separate 59-sequence timing subset, full recomputation averages $29.2$ semantic evaluations
and $1.44\,\mathrm{s}$ of Phi-4 evaluator time per five-event sequence, with $1722$ regional consultations and $354$ batched forward passes in total. Inferred maintenance averages $10.2$ evaluations and $0.58\,\mathrm{s}$, with $602$ consultations and $207$ batches. The measured evaluator-time reduction is $2.49\times$ before compilation. Charging one $0.56\,\mathrm{s}$ instruction compilation to each sequence gives $1.14\,\mathrm{s}$ of accounted model time versus $1.44\,\mathrm{s}$ for full recomputation, a $0.30\,\mathrm{s}$ saving. Savings depend on how many room judgments each event invalidates; compilation is amortized only when accumulated inference-time savings exceed its cost. The wall-clock figures are specific to one local model and one benchmark horizon. Timings are fresh inference with the tape disabled, wall clock around the generation calls on one L40S in bfloat16, one run without warm-up exclusion; the $0.56\,\mathrm{s}$ covers both $K{=}2$ compiler consultations. These figures exclude observation acquisition, change detection, planning, and motion.

\noindent\textbf{Online footprints inferred from simulator state.}
The preceding experiments replay supplied footprints. In the closed-loop AI2-THOR~\cite{kolve2017ai2thor} study, the controller instead obtains footprints by differencing consecutive simulator states. Qualification uses the exact benchmark predicate. Each episode applies five physical events in a fixed order (door closure, relocation, robot move, use, second relocation) and ends after the fifth; agreement is on the final target decision. Reachability is a breadth-first search over the live navigation grid, which a closed door splits, and \emph{nearest} here is navigation-grid path length rather than doorway hops. This evaluates maintenance during execution with metadata-derived changes, not perception-driven grounding. Across $32$ episodes spanning all six families, inferred maintenance matches the full-recomputation target decisions in all $32$ episodes at $10.47$ semantic evaluations versus $26.00$, a $2.48\times$ reduction, with no stale support entries. Event-wise relevance matches the reference on $30/32$ episodes at $11.97$ evaluations.

\begin{figure}[t]
\centering
\includegraphics[width=.86\columnwidth]{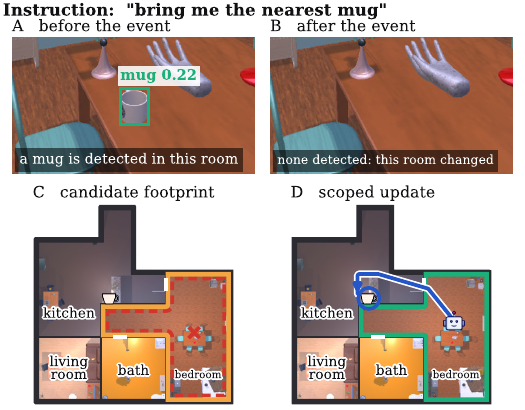}
\vspace{-2pt}
\caption{\textbf{Removal localization} in ProcTHOR validation house 571. (A, B) Matched views before and after mug removal. (C) The candidate footprint contains the affected bedroom, the only room refreshed here. (D) Reselection identifies the kitchen mug; the living room remains unreachable.}
\label{fig:perception}
\vspace{-10pt}
\end{figure}

\noindent\textbf{Observation-derived removal footprints.}
OWLv2~\cite{minderer2023owlv2} (\text{google/owlv2-large-patch14-ensemble} on $896\times672$ renders) processes matched before/after room scans (Fig.~\ref{fig:perception}) for known one-object removals, one standpoint per room at four headings, every room rescanned after the event from the same standpoints. Given the removed object type $T$, the candidate footprint includes rooms whose detection count for $T$ changes and rooms whose prior count reaches the conservative multiplicity threshold. The operating point (detection threshold $0.05$, multiplicity $6$), fixed on $41$ calibration events in $41$ houses, contains the affected room for all $54$ held-out houses, a disjoint index range of the same split, with a mean footprint of $2.19$ of $4.61$ rooms and a one-sided $95\%$ Clopper--Pearson lower bound of $0.946$. This evaluates affected-room localization given matched observations and a supplied event type, not event discovery; scan and detector cost is excluded.
 
\begin{figure*}[t]
\centering
\includegraphics[width=0.98\textwidth]{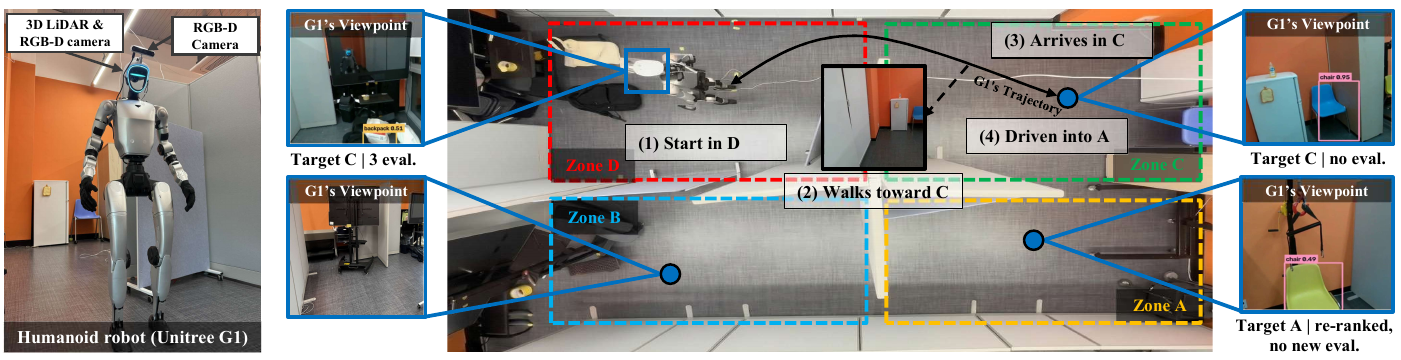}
\vspace{2pt}
\caption{\textbf{Unitree G1 hardware validation.} Left: the robot, with the 3D lidar and the RGB-D camera used for localization and detection. Centre: ceiling view of the four-zone laboratory, chairs in A and C and a backpack in the start zone D, with the path taken in one run of scenario 2, \emph{the nearest thing to sit on}. Insets: the robot's camera in each zone; the three moments of the run carry the target and the evaluation count from the module log. Three initial evaluations select C, one hop away, and the robot walks there and arrives. When the operator then drives it into A, the target re-ranks to A on position alone; position is a selection dependency for this request, so no room is re-judged.}
\label{fig:g1}
\vspace{-10pt}
\end{figure*}

\subsection{Physical-robot demonstration}
Fig.~\ref{fig:g1} shows MIND on a Unitree G1 humanoid in a four-zone laboratory: ROS~2 with lidar localization on a frozen map, Nav2 planning, a YOLO-World detector, and a 4-bit on-board judge (Qwen3-8B in scenarios 1 and 2, Phi-4 in 3 and 4). Signatures came from the deployed Phi-4 compiler, declared to the controller. Provided: zone polygons, candidate regions, and a scoring-only ground-truth inventory; observed: the robot's region, planner reachability, and the detector's inventory. 

Four scenarios were run, $26$ scored runs and $8$ excluded with reasons. (1)~\emph{Something to sit on}, chairs in A and C, a backpack in the start room D: three evaluations selected A, and the walk through C, a transient loss and recovery of the route to A, and the arrival re-judged no room. MIND used $3$ semantic evaluations and approximately $0.33\,\mathrm{s}$ of judge time per run over five runs, compared with $9$ evaluations and approximately $1.0\,\mathrm{s}$ for full recomputation over three. (2)~\emph{The nearest thing to sit on}, same layout and move: the same move re-ranked the target from C to A without additional semantic evaluation under MIND, $3$ against $9$; in both scenarios all scored runs reached the expected target. (3)~\emph{A chair in the room I am in}, chairs in all three rooms: position is semantic, so every move re-judged every room under both policies and nothing was saved, as \eqref{eq:cost} predicts for a global refresh. The judge read the inventory line but not the position line, so every decision in the six runs inherited the same error while navigation and perception passed; the two policies agreed on every decision, as a fixed evaluator requires. (4)~\emph{Something to sit on}, then a barrier closing the route to A and the chair removed from C: the barrier refreshed reachability only and the removal re-judged one room, $4$ evaluations against $15$ to $18$, and both policies abstained; one floor span was declared as wall so the barrier closed the only route. Evaluation counts per event were identical across repetitions; one full-recomputation run was excluded for a detector miss, a perception failure scored apart from the decision. The physical-robot demonstrations are also shown in the supplementary video.

\section{Conclusion}
MIND combines instruction-conditioned qualification and selection dependencies with event footprints to reduce semantic re-evaluation. Under Proposition~\ref{prop:equiv}'s assumptions, selective maintenance preserves full-recomputation target decisions. The evaluations separate savings from spatial scope, request-specific variables, and stage placement, and expose the consequences of missed dependencies and footprints.

The controlled benchmarks assume populated semantic maps and a restricted state vocabulary; dependency labels are not independently annotated. Metadata-derived closed-loop updates and known-removal image localization test different components. Frozen verdict tapes isolate maintenance fidelity, not task correctness or online determinism. The hardware evidence is four scenarios on one layout. In the current-region case both policies inherited the same verdicts from an evaluator that misread the position field; semantic judgment is confined to one replaceable evaluator rather than special-cased in code, so a stronger judge changes the maintenance rule by nothing. Broader instruction coverage and robustness under changing evaluator outputs remain open.

\section*{Acknowledgment}
Claude was used to assist with development and debugging of experimental software and data processing. All AI-assisted content was reviewed and validated by the authors.

This research was supported by the NVIDIA Academic Grant Program using NVIDIA RTX PRO 6000 Blackwell GPUs and by the Ministry of Education of the Republic of Korea and the National Research Foundation of Korea (NRF-2026S1A5A8012989)..

\bibliographystyle{IEEEtran}
\setlength{\bibsep}{0pt}
{\scriptsize\bibliography{references}}

\end{document}